\documentclass[10 pt]{article}

\usepackage{amsmath,amsthm,amssymb,amscd,amstext,amsfonts}
\usepackage{color,verbatim,graphicx,fullpage,url}
\usepackage[dvipsnames]{xcolor}
\IfFileExists{algorithm.sty}{\usepackage{algorithm}}{}
\IfFileExists{algorithmic.sty}{\usepackage{algorithmic}}{}
\usepackage{nicefrac}
\usepackage{fullpage,tikz,enumitem}
\usepackage{tikz-cd}
\usepackage{bm}
\IfFileExists{physics.sty}{\usepackage{physics}}{}
\providecommand{\norm}[1]{\lVert#1\rVert}

\usepackage{todonotes}
\usepackage{mathtools}
\usepackage{thm-restate}

\usepackage[breaklinks=true, linktocpage=true,hyperfootnotes=false]{hyperref}
\hypersetup{
    colorlinks = true,
    linkcolor={purple},
    urlcolor={purple},
    citecolor={blue!80!black}
}

\usepackage[nameinlink, noabbrev, capitalize]{cleveref}

\usepackage[utf8]{inputenc}
\usepackage[T1]{fontenc}

\Crefname{enumi}{Property}{Properties}

\theoremstyle{plain}

\newtheorem{thm}{Theorem}[section]
\newtheorem{theorem}[thm]{Theorem}

\crefname{atheorem}{Theorem}{Theorems}
\Crefname{atheorem}{Theorem}{Theorems}

\newtheorem{lemma}[thm]{Lemma}
\newtheorem{proposition}[thm]{Proposition}
\newtheorem{definition}[thm]{Definition}

\newtheorem*{claim*}{Claim}

\theoremstyle{remark}

\renewcommand\norm[1]{\left|\!\left|#1\right|\!\right|}

\newcommand{\Set}[1]{\left\{#1\right\}}

\def\1{\mathbf{1}} 
\def\0{\mathbf{0}}

\DeclareMathOperator{\Ex}{\mathbb{E}}           % expected value 
\def\supp{{\mathrm{supp}}}                               % support 

\newcommand{\R}{\mathbb{R}}

\DeclareMathOperator{\sign}{sgn}

\newcommand{\eps}{\varepsilon}

\usepackage{ifthen}
\newcommand{\agnorm}[2][]{
	\ifthenelse{\equal{#2}{}}{
		\widetilde{\gamma}_2^{#1}
	}{
		\widetilde{\gamma}_2^{#1}(#2)
	}
}

\IfFileExists{mathx10.tfm}{
\DeclareFontFamily{U}{mathx}{}
\DeclareFontShape{U}{mathx}{m}{n}{<-> mathx10}{}
\DeclareSymbolFont{mathx}{U}{mathx}{m}{n}
\DeclareMathAccent{\widecheck}{0}{mathx}{"71}

}{}

\newcommand{\cC}{\mathcal{C}}

\newcommand{\cP}{\mathcal{P}}

\newcommand{\E}{\mathbb{E}}

\DeclareMathOperator{\LR}{LR}

\DeclareMathOperator{\loss}{loss}                                             % loss

\title{A polylogarithmic bound for list replicability}

\author{
    Ari Blondal\thanks{McGill University, \texttt{ari.blondal@mail.mcgill.ca, hamed.hatami@mcgill.ca}. Hamed Hatami and Ari Blondal are supported by NSERC grants.} \and Hamed Hatami\footnotemark[1] \and
    Pooya Hatami~\thanks{Ohio State University, \texttt{\{hatami.2, lalov.1, tretiak.2\}@osu.edu}} \and  Chavdar Lalov\footnotemark[2] \and Sivan Tretiak\footnotemark[2]
}

\begin{document}

\maketitle
 
\begin{abstract}
Zhang~\cite{zhang_polylogarithmic_2026} recently posted a manuscript announcing that the list-replicability number of every class of $M\ge 2$ binary concepts on $N\ge 2$ domain points is $O(\log^2 M\log N)$, which would be the first polylogarithmic upper bound on this quantity. In this note, we present a short, self-contained proof of this bound, following the high-level approach of the manuscript but with considerably simpler arguments.
\end{abstract}

\section{Introduction}
In the study of PAC learning, \emph{replicability} is a guarantee that resampling training data will not dramatically alter the output of a learning algorithm. Such a property is of immense practical value for increasing model trustworthiness and reliability. Consequently, a variety of rigorous mathematical formulations of replicability have been introduced~\cite{ alon2022private, impagliazzo_reproducibility_2022, dixon2023listandcertificate, chase2023stability, hopkins2026approximatereplicabilitylearning}. Subsequent study has revealed nuanced implications between these formulations, sometimes depending on the cardinality of the domain underlying the learning task~\cite{moran_bayesian_2023, chase2023stability, bun_stability_2023, hopkins_role_2025}.

Of the assorted definitions of replicability, we will focus on \emph{list replicability}, introduced concurrently in~\cite{dixon2023listandcertificate, chase2023stability}. An algorithm learns a concept class \emph{list replicably} if (with high probability) it outputs a hypothesis in a small, accurate list. Although the list size is fixed, its members may depend on the underlying unknown distribution. Since the inception of list replicability, much progress has been made in understanding its properties and contextualizing it among more well-established learning theory invariants~\cite{ghazi2021sample, alon2022private, chase2023stability, hopkins_role_2025, blondal_sign-rank_2026}.

However, several fundamental questions about the definition remained open. Among these was the question of the optimal list size, known as the \emph{list replicability number}, in the finite setting. Consider a binary concept class $\cC \subseteq \{\pm 1\}^X$ with $|\cC|=M$ and $|X|=N$, where throughout the paper, we assume $M,N \ge 2$. Such a class can trivially be learned with a list comprised of all $M$ concepts, and it was shown \cite{chase2023stability} that a list of the same size $N$ as the domain also suffices. Together these give a rudimentary upper bound of $\min(M,N)$ on the list replicability number $\LR(\cC)$ of the class (see \Cref{def:lr} for a formal definition).

Until recently, it was not known whether the bound $\min(M,N)$ was close to optimal, or even whether a class exists that requires a list polynomially large in $M$ and $N$. We conjectured in~\cite{blondal_sign-rank_2026} that a list of size $O(\log M \log N)$ always suffices, and proved that this bound holds for random concept classes.\footnote{We write $\log$ for base $2$ logarithms and $\ln$ for natural logarithms.} This conjectured bound would be optimal up to constant factors (see~\cite{blondal_sign-rank_2026}). Recently, Zhang announced a proof \cite{zhang_polylogarithmic_2026} of an $O(\log^2 M \log N)$ upper bound, which would determine the list replicability number up to a factor of $\log M$.

\begin{theorem}[Main theorem]\label{thm:main}
There is an absolute constant $K>0$ such that every finite concept class $\cC\subseteq\{-1,1\}^X$ with $M=|\cC|\ge2$ and $N=|X|\ge2$ satisfies
\begin{equation}\label{eq:main-bound}
\LR(\cC)\le K\log^2 M\log N.
\end{equation}
\end{theorem}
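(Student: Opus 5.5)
The plan is to reduce list replicability to a \emph{rounding} problem in low dimension, and then to control how many rounding cells a small neighbourhood of a distribution can meet. Fix the target accuracy $\eps$ and confidence, and consider an unknown realizable distribution $D$ on $X\times\{\pm1\}$. The algorithm will be deterministic given the sample. It will first compute a vector of empirical statistics $\hat v\in\R^d$ that concentrates, with high probability, in an $\ell_\infty$-ball of radius $\delta$ around a vector $v(D)$ determined by $D$. It will then output $h=F(\hat v)$, where $F$ is constant on the cells of a fixed partition of $\R^d$. The list for $D$ is the set of values of $F$ on the $\delta$-ball around $v(D)$. Using a partition in which every $\delta$-ball meets at most $d+1$ cells (a KKM/Sperner-type cube tiling, the optimum by the Lebesgue covering lemma), we get $\LR(\cC)\le d+1$. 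So the target is $d=O(\log^2M\log N)$, together with accuracy of every hypothesis $F$ can output near $v(D)$.

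\textbf{Choosing the statistics.} I would build $F$ by a multiplicative-weights/halving procedure. There are $T=O(\log M)$ rounds, and each round shrinks the version space $V_t\subseteq\cC$ by a constant factor in weight. In round $t$, the algorithm needs estimates of the $D$-mass of the disagreement regions of a small family of candidate "majority-type" hypotheses built from $V_t$. By a minimax/Carath\'eodory sparsification (the VC dimension is at most $\log M$, and a weighted majority over $V_t$ is $\eps$-approximated by a majority of $O(\log N)$ concepts on any distribution over $N$ points), it suffices in each round to estimate $O(\log M\log N)$ numbers to accuracy $\delta$. Stacking the rounds gives $d=O(\log^2M\log N)$ coordinates. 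The crucial point is that the family of quantities estimated in round $t$ must not depend on the earlier rounded outcomes in a way that multiplies list sizes. I would therefore index the coordinates by the \emph{rounded} values of previous rounds only through the single global partition of $\R^d$, so that the whole procedure is one function of one $d$-dimensional estimate.

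\textbf{Accuracy.} I would check that every cell meeting the $\delta$-ball around $v(D)$ corresponds to a run of the halving procedure in which each decision was made with statistics that are $2\delta$-accurate for $D$. Then the standard multiplicative-weights analysis shows that the final hypothesis has error at most $\eps$ once $\delta=\Theta(\eps/\log M)$. The sample size only affects $\delta$, not $d$, so the list bound is independent of $\eps$.

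\textbf{Main obstacle.} The hard step is the adaptivity in the second paragraph. Naively, each round's rounding can branch into several cells, and the branches compound to an exponential list. To avoid this, I would first try to make later rounds' statistics \emph{continuous} functions of earlier estimates, for example using soft (exponential) weights instead of rounded version spaces, so that $v(D)$ is a single point of $\R^d$ and the rounding is applied only once, at the end. The argument to verify carefully is that these soft weights still admit the $O(\log N)$-sized sparsification needed in each round without breaking continuity. If that fails, the fallback is a threshold-shifting argument: in each round, choose the rounding offset from $O(\log M\log N)$ candidates so that $v(D)$ is far from the cell boundaries. This keeps the list growth additive across rounds rather than multiplicative.
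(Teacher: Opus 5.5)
\textbf{There is a genuine gap: you never produce a statistic of dimension $O(\log^2M\log N)$, and that is the whole content of the theorem.}

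Your outer reduction is fine. Suppose you had a fixed $v(D)\in\R^d$ with $d=O(\log^2M\log N)$ such that every point near $v(D)$ decodes to an accurate hypothesis. Then rounding with a partition of degree $d+1$ would finish the proof; the paper does the equivalent by pulling back a fine cover of multiplicity $d+1$ via covering dimension. The problem is your dimension count. Sparsification shows that a round's hypothesis can be \emph{described} by $O(\log N)$ concept indices, i.e.\ $O(\log M\log N)$ bits, so there are $M^{O(\log N)}$ candidates. The rounding bound counts real coordinates, not bits. Estimating the disagreement mass of each candidate (or the loss of each $c\in V_t$ on the relevant region) costs one coordinate per candidate, so you actually get $M^{O(\log N)}$ (at best $|V_t|$) coordinates per round.

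Passing from ``the relevant objects form a set of log-size $O(\log^2M\log N)$'' to ``dimension $O(\log^2M\log N)$'' needs an extra idea. In the paper it goes as follows:
\begin{enumerate}
\item Use a single non-adaptive Gibbs vector $q_i(\mu)\propto e^{-\beta\ell_i(\mu)}$ with $\beta=32\ln M$. It has accuracy $\ln M/\beta$ and involves no rounds.
\item Pass to its square-root image $U\subseteq S^{M-1}$.
\item Build a $\delta$-net of $U$ of size $(2N)^{O(\beta^2)}$ by sparsifying the \emph{distribution} $\mu$ to $O(\log^2M)$ samples, using $1-t\le-\ln t$ and Hoeffding's lemma.
\item Apply Johnson--Lindenstrauss to the net and extend by a partition of unity. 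This gives a continuous $F:U\to\R^k$ with $k=O(\log|Z|)$ and $\norm{w-w'}_2\le O(\delta)+\sqrt2\norm{F(w)-F(w')}_2$.
\end{enumerate}

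The adaptivity problem you flag is also left open, and your remedies do not close it. If round-$t$ coordinates are indexed by earlier rounded outcomes, the single global vector needs a block for every possible history, so dimensions multiply across rounds. Soft weights make $v(D)$ continuous but put it in $\R^M$. Sparsifying the concepts by random sampling is exactly what destroys continuity; the paper sparsifies only to build a net, never to define the map. The offset-shifting fallback just moves the replicability problem to the $D$-dependent choice of offset. In the cover formulation, allowing all offsets multiplies the multiplicity.

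Your claim that $\eps$ only affects $\delta$ is also not automatic. Exponential weights at inverse temperature $\beta$ give accuracy only $\ln M/\beta$. Reaching accuracy $\eps$ directly would force $\beta\gtrsim\ln M/\eps$, and the net would then depend on $\eps$. The paper gets $\eps$-independence separately: it replaces $\mu$ by a reweighting $\nu\le\mu/\eps$ that minimizes the partition function. The Gibbs vector of $\nu$ stays in the same $U$, so the constant-accuracy cover is reused. First-order optimality, tested against $\mu(\cdot\mid A)$, then shows the predictor is off by at least $1/2$ only on $\mu$-mass below $\eps$.

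A smaller point: cells of a degree-$(d+1)$ partition are necessarily much wider than the tolerance $\delta$. So your ``$2\delta$-accurate statistics'' should read ``accurate up to the cell width''; fixing this only costs shrinking $\delta$.
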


The proof of \cref{thm:main} bounds only the size of the designated list; it provides no quantitative bounds on sample complexity, nor any guarantees of computational efficiency. 

One notable feature of the argument in \cite{zhang_polylogarithmic_2026} is that it aims to prove the upper bound on list size via a geometric viewpoint. This idea is not inherently new. Despite list replicability being a purely learning-theoretic definition, the optimal list size is closely tied to topological properties of the concept class \cite{chase_local_2024, blondal2026simplicial}. Indeed, the use of topological and geometric techniques in list replicability is a well-established line of work \cite{chase2023stability, dixon2023listandcertificate, chase_local_2024, blondal2026borsuk, blondal_sign-rank_2026, blondal2026simplicial, blondal_tight_2026, zhang_polylogarithmic_2026}. Broadly, these arguments involve analyzing the space of realizable distributions $\Delta_\cC$ associated to the concept class $\cC$ at hand. However, topological methods have seen the most success in proving impossibility results, that is, lower bounds on list size. The algorithm proposed in \cite{zhang_polylogarithmic_2026} is one of the rare list-replicable algorithms designed by working directly with the geometry of the space of realizable distributions $\Delta_\cC$. 

The polylogarithmic upper bound on the list size also has ramifications for topological invariants of the space $\Delta_\cC$ of realizable distributions. A $\mathbb{Z}_2$-space is a topological space with a continuous notion of antipodality (i.e., an involution with no fixed points). The $\mathbb{Z}_2$-index of such a space $Z$, denoted by $\operatorname{ind}_{\mathbb{Z}_2}(Z)$, is the smallest dimension $d$ such that there is a continuous antipodal map from $Z$ to the sphere $S^{d}$. The space $\Delta_{\cC \cup -\cC}$ of distributions realizable by concepts in $\cC$ or their negations is a $\mathbb{Z}_2$-space, with antipodality given by negating the labels; we define the \emph{$\mathbb{Z}_2$-index} of $\cC$ to be the $\mathbb{Z}_2$-index of this space. It is shown in~\cite{blondal_sign-rank_2026} that this topological invariant is bounded from above by twice the list replicability number of $\cC$:
\begin{equation}\label{eq:LR_to_ind}
\operatorname{ind}_{\mathbb{Z}_2}(\cC)\leq 2\LR(\cC)-1.
\end{equation}

Consequently, the polylogarithmic upper bound in \cite{zhang_polylogarithmic_2026} would immediately give an upper bound on the $\mathbb{Z}_2$-index. That bound is a weaker version of a concurrent result of \cite{frick_superpolynomial_2026}, which uses Morse-theoretic methods to prove a topological generalization of the Figiel--Lindenstrauss--Milman inequality to achieve an $O(\log M \log N)$ upper bound on $\mathbb{Z}_2$-index. Obtaining a polylogarithmic bound on the $\mathbb{Z}_2$-index was partly motivated by its consequence for triangulations of real projective space: it gives a superpolynomial lower bound on their number of vertices, which was unknown before~\cite{frick_superpolynomial_2026}.

This paper does not contain new results. Instead, it aims to greatly simplify the argument of \cite{zhang_polylogarithmic_2026}.

\paragraph{AI methodology.} 
The conjectured polylogarithmic bound of $O(\log M \log N)$ on list size was formulated in~\cite{blondal_sign-rank_2026} without AI input. After~\cite{zhang_polylogarithmic_2026} announced a 77 page proof of an $O(\log^2 M \log N)$ bound, the authors used AI to condense the manuscript into roughly 15 pages of notes, and then used these notes, together with follow-up questions to the AI, to extract its high-level approach.

The proofs and exposition in this note were drafted using a mix of human writing and AI tools. The proofs do not depend on the correctness of the arguments in Zhang's manuscript. After completing a working draft and verifying the proof by hand, the authors asked AI to check for bugs; it found nothing of note. AI was used to help with typesetting and spellcheck.

\paragraph{Paper organization.} We introduce and discuss the necessary learning-theoretic and topological preliminaries in \cref{sec:prelims}. We give an overview of the proof in \cref{sec:overview}.

The proof is first discussed for constant accuracy in \cref{sec:constant}, where the argument is divided into three steps: Gibbs sampling in \cref{sec:gibbs}, the fixed-distance net in \cref{sec:Sampling}, and dimension reduction in \cref{sec:compression}. Lastly, the result is extended to arbitrary accuracy in \cref{sec:amplification}.

\subsection{Preliminaries}\label{sec:prelims}

The initial setting for this paper is classical binary classification. Let $X$ be a \emph{finite domain} of size $N$, and take our labels to be in $\Set{-1,1}$. The learning task at hand is represented by a \emph{concept class} (also known as a hypothesis class) $\cC = \Set{c_1,\ldots,c_M} \subseteq \Set{-1,1}^X$ of size $M$. We make the additional non-degeneracy assumption that neither the domain nor the concept class consists of only one element, so $N=|X|\ge2$ and $M=|\cC|\ge2$.

Traditionally, a learning algorithm receives a sample drawn from a distribution $\mu$ on $X\times\Set{-1,1}$. The algorithm then outputs a \emph{hypothesis} $h\colon X\to\Set{-1,1}$. Note that we will allow \emph{improper learning}, where the output need not be in the concept class $\cC$. It will be convenient to further relax the setting to allow real-valued hypotheses $h\colon X\to\R$.

We define the \emph{loss} of $h$ with respect to the probability distribution $\mu$ on $X\times\Set{-1,1}$ by
\[
    \loss_\mu(h)\coloneqq\frac12\Ex_{(x,y)\sim\mu}|h(x)-y|.
\]
Note that rounding the real valued hypothesis $h$ to the Boolean hypothesis $x\mapsto\sign(h(x))$, with $\sign(0)=1$, increases its loss by at most a factor of two.

As is typical of papers studying list replicability from a topological viewpoint, we will almost completely eschew many of the algorithmic notions of learning theory going forward. The algorithmic definition of list replicability will feature only informally before introducing the closed cover characterization of list replicability that we will work with throughout the proof.

\paragraph{Topological method.}

Statistical learning generally assumes the distribution $\mu$ is \emph{realizable}, that is $\loss_\mu(c)=0$ for some $c\in\cC$. Algorithmically, list replicability requires that, for each realizable distribution, a learner's output belong with arbitrarily high probability to a list of at most $L$ hypotheses, each of loss at most $\eps$, when given sufficiently many independent examples. Crucially, the list is associated with the distribution rather than the observed sample.

The starting point for the topological method in list replicability is to geometrize the combinatorial information encoded by~$\cC$. Let $\Delta\subseteq\R^{X\times\Set{-1,1}}$ be the simplex of all probability distributions on $X\times\{-1,1\}$, and let $\Delta_\cC\subseteq\Delta$ be the space of realizable distributions, equipped with the $\ell_1$ metric. This space is compact, being a finite union of closed simplices.

We use the following topological characterization of list replicability as our definition. The \emph{multiplicity}
of a finite cover is the largest number of sets in the cover containing a
common point.

\begin{definition}[List replicability via closed covers]\label{def:lr}
For $\eps>0$, let $\LR(\cC,\eps)$ be the smallest positive integer $L$
for which there is a finite closed cover $\{R_a\}_{a\in A}$ of
$\Delta_\cC$ whose multiplicity is at most $L$, and hypotheses
$h_a$ for $a\in A$, such that
\[
    \mu\in R_a\quad\Longrightarrow\quad\loss_\mu(h_a)\le\eps.
\]
\end{definition}
The list replicability number is
\[
    \LR(\cC)\coloneqq\sup_{\eps>0}\LR(\cC,\eps).
\]

The closed-cover characterization of Chase, Moran, and
Yehudayoff~\cite{chase2023stability} relates our definition to
the algorithmic notion of list replicability; see also
\cite[Theorem~2.11]{blondal_sign-rank_2026}.
An $\eps$-accurate list learner with list size $L$, immediately yields a closed cover witnessing $\LR(\cC,\eps)\le L$. For the other direction, a closed cover witnessing $\LR(\cC,\eps/3)\le L$, can be rounded to a cover of the same multiplicity with Boolean hypotheses $h_a$ of loss $2\eps/3<\eps$. This allows enough slack, to turn this cover into an $\eps$-accurate list learner with list size at most $L$. 

The cover formulation lets us work entirely with the geometry of $\Delta_\cC$.

\subsection{Overview of the proof}\label{sec:overview} 
By \Cref{def:lr}, we need a finite closed cover $\{R_a\}$ of
$\Delta_\cC$ with small multiplicity such that each $R_a$ admits
a hypothesis $h_a$ with small loss for every distribution in $R_a$.

Since the topological dimension of $\mathbb{R}^k$ is $k$,  every compact subset of $\R^k$ admits arbitrarily fine finite
closed covers of multiplicity at most $k+1$. Therefore, it suffices to find a continuous map from $\Delta_\cC$ into $\R^k$, with
$k=O(\log^2 M\log N)$, such that the preimage of each sufficiently
small set admits a common accurate hypothesis. Pulling back a
fine cover of the image then gives the desired cover.

We describe the construction at constant loss $\eps=1/4$.
In \Cref{sec:amplification}, we extend it to arbitrary $\eps>0$
without increasing the multiplicity.
\begin{itemize}
\item \textbf{Distributions over concepts.}
We assign to each realizable distribution $\mu$ a probability
vector $q(\mu)=(q_1(\mu),\ldots,q_M(\mu))$ on the concepts whose average prediction
\[
h_{q(\mu)}(x)=\sum_{i\in[M]}q_i(\mu)c_i(x)
\]
has small loss. We require this assignment to be continuous, and its image to admit a small net. To make $h_{q(\mu)}$ have small loss, $q(\mu)$ should
put more weight on concepts $c_i$ of smaller loss. To make the net
small, however, $q(\mu)$ should not be too concentrated on a few
concepts; that is, it should have high entropy. We use Gibbs weights to balance these two requirements. 

Instead of working directly with $q$, we use the square-root representation
\[
u(\mu)=\bigl(\sqrt{q_i(\mu)}\bigr)_{i\in[M]},
\qquad U=u(\Delta_\cC)\subseteq\R^M,
\]
so that \emph{Euclidean distance} controls the differences between
the corresponding predictions:
\[
\norm{h_{q(\mu)}-h_{q(\mu')}}_\infty
\le \norm{q(\mu)-q(\mu')}_1
\le 2\norm{u(\mu)-u(\mu')}_2.
\]
\item \textbf{A small empirical net.}
Empirical approximation shows that $u(\mu)$ is within a
sufficiently small constant distance of $u(\widehat\mu)$ for
some empirical distribution $\widehat\mu$ on $s=O(\log^2 M)$
examples. The vectors $u(\widehat\mu)$ arising from realizable
samples therefore form a net $Z\subseteq U$, and counting the
possible samples gives
\[
\log|Z|\le O(s\log( N))=O(\log^2 M\log N).
\]
\item \textbf{Johnson--Lindenstrauss dimension reduction.}
The net $Z$ lies in $\R^M$, whose dimension is far larger than the
desired bound on $k$. However, the Johnson--Lindenstrauss lemma
provides an embedding of $Z$ into $\R^k$ with $k=O(\log|Z|)$ and
distortion arbitrarily close to $1$. Averaging the images of nearby
net points extends this embedding to a continuous map $F:U\to\R^k$
under which sets of sufficiently small diameter have preimages of
small diameter. This gives the maps
\[
\Delta_\cC \xrightarrow{\;u\;} U\subseteq\R^M \xrightarrow{\;F\;} F(U)\subseteq\R^k.
\]
Pulling back a fine cover of $F(U)$ along $F\circ u$ therefore gives
a cover of $\Delta_\cC$ with multiplicity at most $k+1$ and a common
predictor of small constant loss on each member. A similar
combination of a net and the Johnson--Lindenstrauss lemma for constructing such continuous maps was used in~\cite{hatami2023borsuk}.
\end{itemize}

\section{\texorpdfstring{\Cref{thm:main}}{Main theorem} at constant accuracy}\label{sec:constant}

In this section, we prove that $\LR(\cC,1/4)=O(\log^2 M\log N)$;
see \Cref{prop:constant-accuracy} below.

\subsection{Gibbs  sampling}\label{sec:gibbs}

Let $\cP_M$ denote the probability simplex on $[M]$. Each
$q\in\cP_M$ defines a randomized concept: on input $x$, sample
$i\sim q$ and output $c_i(x)$. Define
\[
h_q(x)\coloneqq \E_{i\sim q}c_i(x)
=\sum_{i\in[M]}q_i c_i(x).
\]

We seek a continuous map $\mu\in \Delta \mapsto q(\mu)\in\cP_M$ satisfying
two requirements: (i) the predictor $h_{q(\mu)}$ has small
$\mu$-loss for every realizable $\mu$, and (ii) the image of the map
admits a small net in a suitable Euclidean representation.

A natural choice is the Gibbs distribution associated with the
concept losses. For $\mu\in\Delta$ and $i\in[M]$, write
$\ell_i(\mu)\coloneqq\loss_\mu(c_i)$. Given a parameter $\beta>0$,
define
\[
q_i(\mu)\coloneqq\frac{e^{-\beta\ell_i(\mu)}}{Z_\mu},
\qquad
Z_\mu\coloneqq\sum_{j\in[M]}e^{-\beta\ell_j(\mu)}.
\]
These weights depend continuously on $\mu$ and favor concepts of smaller loss. Taking $\beta$ large concentrates the Gibbs weights $q_i$ on concepts of small loss and consequently improves the $\mu$-loss of $h_{q(\mu)}$. However, the trade-off is that it will lead to more samples in the empirical approximation used to construct our net. We will choose $\beta$
of order $\log M$ to obtain the desired constant accuracy while
keeping the net small.
\begin{lemma}\label{lem:gibbs}
For every realizable $\mu$,
\begin{equation}\label{eq:prediction-error}
\loss_\mu(h_{q(\mu)})\le\frac{\ln M}{\beta}.
\end{equation}
\end{lemma}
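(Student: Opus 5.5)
Since $h_{q(\mu)}$ is a convex combination of the concepts and the loss is convex (indeed linear on the relevant range), we reduce to a bound on the Gibbs-weighted average of the losses. For each $(x,y)$ with $y\in\{-1,1\}$ and $|h(x)|\le 1$, we have $|h(x)-y| = 1 - y\,h(x)$, which is affine in $h$. Therefore
\[
\loss_\mu(h_{q(\mu)}) = \frac12\,\E_{(x,y)\sim\mu}\Bigl(1-y\sum_i q_i(\mu)c_i(x)\Bigr) = \sum_{i\in[M]} q_i(\mu)\,\ell_i(\mu),
\]
with equality. It then suffices to show that $\sum_i q_i\ell_i\le \ln M/\beta$ whenever some $\ell_{i^*}(\mu)=0$, which holds by realizability.

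For this Gibbs inequality we use the variational characterization of the Gibbs distribution. The vector $q=q(\mu)$ minimizes $\beta\sum_i p_i\ell_i - H(p)$ over $p\in\cP_M$, where $H$ is the natural-log entropy, and the minimum value is $-\ln Z_\mu$. Hence
\[
\beta\sum_i q_i\ell_i = -\ln Z_\mu + H(q) \le -\ln Z_\mu + \ln M .
\]
Since $\ell_{i^*}=0$, we get $Z_\mu\ge e^{0}=1$, so $-\ln Z_\mu\le 0$. This gives $\sum_i q_i\ell_i\le \ln M/\beta$.

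As a sanity check, the identity $\beta\sum_i q_i\ell_i = H(q)-\ln Z_\mu$ can also be verified directly, without appealing to the variational principle. Substitute $\ln q_i = -\beta\ell_i-\ln Z_\mu$ into $H(q)=-\sum_i q_i\ln q_i$. Nothing here is a real obstacle; the only point needing care is the linearity of the loss in $h$, which relies on $|h_q(x)|\le 1$.
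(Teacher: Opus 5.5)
Your proof is correct and is essentially the paper's argument: both reduce the loss to $\sum_i q_i(\mu)\ell_i(\mu)$ using $y\in\{-1,1\}$, and then show $\beta\sum_i q_i\ell_i\le\ln M-\ln Z_\mu\le\ln M$. Your entropy bound $H(q)\le\ln M$ is exactly the paper's Jensen step $\sum_i q_i\ln(e^{\beta\ell_i})\le\ln\sum_i q_i e^{\beta\ell_i}=\ln(M/Z_\mu)$, rewritten via $\ln(1/q_i)=\beta\ell_i+\ln Z_\mu$.
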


\begin{proof}
Since $y\in\{-1,1\}$, we have
\[
\loss_\mu(h_{q(\mu)})
=\frac12\E_{(x,y)\sim\mu}|h_{q(\mu)}(x)-y|
=\E_{(x,y)\sim\mu}\sum_i q_i(\mu)\1_{[c_i(x)\ne y]}
=\sum_i q_i(\mu)\ell_i(\mu).
\]
By the concavity of $\ln$,
\[
\beta\sum_i q_i(\mu)\ell_i(\mu)
=\sum_i q_i(\mu)\ln\!\left(e^{\beta\ell_i(\mu)}\right)
\le \ln\!\left(\sum_i q_i(\mu)e^{\beta\ell_i(\mu)}\right)=\ln\frac{M}{Z_\mu}
\le\ln M.
\]
The second equality uses
$q_i(\mu)e^{\beta\ell_i(\mu)}=1/Z_\mu$.
The last inequality follows from realizability: we have
$\ell_i(\mu)=0$ for some $i$, so $Z_\mu\ge1$.
\end{proof}

\subsection{A fixed-distance net}
\label{sec:Sampling}

The $\ell_1$ distance between weight vectors naturally controls
the difference between their predictions. To work with Euclidean
geometry, we use the square-root embedding
\[
u(\mu)\coloneqq
\left(\sqrt{q_1(\mu)},\ldots,\sqrt{q_M(\mu)}\right)
\in S^{M-1}.
\]

The image $U=u(\Delta_\cC)$ is a subset of the sphere $S^{M-1}$, and in
this section we show that it has a $\delta$-net of size
$(2N)^{\lceil\beta^2/\delta^2\rceil}$.

We next construct this net by
approximating the losses using a small sample.

\begin{proposition}\label{prop:net}
For every $0<\delta\le1$, the set $U$ has a $\delta$-net
$Z\subseteq U$ of size at most
$(2N)^{\lceil\beta^2/\delta^2\rceil}$.
\end{proposition}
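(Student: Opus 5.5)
The plan is to build the net from empirical distributions. Set $s\coloneqq\lceil\beta^2/\delta^2\rceil$. Let $Z$ be the set of all $u(\widehat\mu)$, where $\widehat\mu$ ranges over empirical distributions of sequences $(z_1,\dots,z_s)\in(X\times\{-1,1\})^s$ that are realizable by some concept. Then $Z\subseteq U$ and $|Z|\le(2N)^s$. It remains to show that for every realizable $\mu$ some such $\widehat\mu$ satisfies $\norm{u(\mu)-u(\widehat\mu)}_2\le\delta$. We draw $z_1,\dots,z_s$ i.i.d.\ from $\mu$ and bound $\E\norm{u(\mu)-u(\widehat\mu)}_2^2\le\delta^2$. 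The empirical $\widehat\mu$ is supported on $\supp\mu$, so any concept realizing $\mu$ also realizes $\widehat\mu$, and $u(\widehat\mu)\in U$.

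\textbf{Reduction to a Jensen gap.} Write $\ell=(\ell_i(\mu))_i$ and $\widehat\ell=(\ell_i(\widehat\mu))_i$. Then $\widehat\ell=\frac1s\sum_t a(z_t)$ with $a(z)_i=\1_{[c_i(x)\ne y]}\in\{0,1\}$ and $\E a(z_t)=\ell$. Let $\varphi(v)\coloneqq\ln\sum_i e^{-\beta v_i}$, which is convex; then $q_i=e^{-\beta\ell_i-\varphi(\ell)}$. The Hellinger affinity is
\[
A\coloneqq\sum_i\sqrt{q_i(\mu)q_i(\widehat\mu)}=\exp\!\Bigl(\varphi\bigl(\tfrac{\ell+\widehat\ell}{2}\bigr)-\tfrac12\varphi(\ell)-\tfrac12\varphi(\widehat\ell)\Bigr).
\]
Hence
\[
\norm{u(\mu)-u(\widehat\mu)}_2^2=2-2A\le-2\ln A=\varphi(\ell)+\varphi(\widehat\ell)-2\varphi\bigl(\tfrac{\ell+\widehat\ell}2\bigr).
\]
Convexity gives $\varphi(\frac{\ell+\widehat\ell}2)\ge\varphi(\ell)+\frac12\nabla\varphi(\ell)\cdot(\widehat\ell-\ell)$. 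The linear term has expectation zero because $\widehat\ell$ is unbiased. Therefore
\[
\E\norm{u(\mu)-u(\widehat\mu)}_2^2\le\E\varphi(\widehat\ell)-\varphi(\ell),
\]
which is a Jensen gap for $\varphi$ applied to an average of independent bounded vectors.

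\textbf{Bounding the Jensen gap (main step).} A naive Taylor bound fails: $\widehat\ell-\ell$ is not small in sup norm uniformly over $M$ coordinates when $s$ is independent of $M$. Instead, replace the samples $a(z_t)$ by their mean $\ell$ one at a time. Let $w_t=\frac1s\bigl(\sum_{r\le t}a(z_r)+(s-t)\ell\bigr)$, so that $w_0=\ell$ and $w_s=\widehat\ell$. Conditioned on $z_1,\dots,z_{t-1}$, the step $w_t-w_{t-1}=\frac1s(a(z_t)-\ell)$ has mean zero and sup norm at most $1/s$. The Hessian of $\varphi$ satisfies $v^\top\nabla^2\varphi(w)v=\beta^2\operatorname{Var}_{i\sim q(w)}(v_i)\le\beta^2\norm{v}_\infty^2/4$. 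Each step therefore contributes at most $\frac12\cdot\frac{\beta^2}{4}\cdot\frac1{s^2}$ in expectation: the first-order term vanishes by the conditional mean-zero property. Summing over $s$ steps gives
\[
\E\varphi(\widehat\ell)-\varphi(\ell)\le\frac{\beta^2}{8s}\le\frac{\delta^2}{8}\le\delta^2.
\]
This step is the one I expect to need care. One must check that the variance bound holds along the segment between $w_{t-1}$ and $w_t$ (it does, since the bound on the Hessian is uniform in $w$). One must also verify that the conditional first-order term really vanishes.

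\textbf{Conclusion.} Some realization of the sample has $\norm{u(\mu)-u(\widehat\mu)}_2\le\delta$, and hence $Z$ is a $\delta$-net of $U$ of size at most $(2N)^{\lceil\beta^2/\delta^2\rceil}$.
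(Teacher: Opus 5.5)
Your proof is correct once one constant is fixed, and it shares its first half with the paper's. Your Bregman divergence $\varphi(\widehat\ell)-\varphi(\ell)-\nabla\varphi(\ell)\cdot(\widehat\ell-\ell)$ equals $\beta\sum_i q_i d_i+\ln R=\mathrm{KL}(q\,\|\,\widehat q)$, where $d_i=\widehat\ell_i-\ell_i$ and $R=\sum_i q_i e^{-\beta d_i}$. The paper reaches the same quantity more directly, via $1-t\le-\ln t$ with $t=\sqrt{\widehat q_i/q_i}$.

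Both proofs therefore reduce to bounding $\E\ln R=\E\varphi(\widehat\ell)-\varphi(\ell)$, and this is where you genuinely diverge. The paper applies Jensen once more, $\E\ln R\le\ln\E R$. Since $\E R=\sum_i q_i\,\E e^{-\beta d_i}$ is linear in the scalar moment generating functions, Hoeffding's lemma applied to each $d_i$ separately gives $\beta^2/(8s)$. The coupling between coordinates, which is the difficulty you identify with a naive Taylor bound, never has to be controlled. Your Lindeberg-type replacement (mean-zero increments of sup norm at most $1/s$ plus a uniform Hessian bound) is longer. It is also more robust: it works for any convex potential whose Hessian is bounded along $\ell_\infty$-bounded directions, at the price of a worse constant.

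That constant is worse than you state. The claim $\operatorname{Var}_{i\sim q(w)}(v_i)\le\norm{v}_\infty^2/4$ is false in general. Take $v=(1,-1)$ and $w_1=w_2$, so $q(w)=(\tfrac12,\tfrac12)$ and the variance is $1$. A quantity with range of length $r$ has variance at most $r^2/4$, and the coordinates of $v$ can span an interval of length $2\norm{v}_\infty$.

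In your application this really happens. The coordinate $a(z_t)_i-\ell_i$ is close to $1$ for a concept that rarely errs but errs on $z_t$. It is close to $-1$ for a concept that almost always errs but not on $z_t$.

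The correct uniform bound is $v^\top\nabla^2\varphi(w)v\le\beta^2\norm{v}_\infty^2$. This gives $\beta^2/(2s^2)$ per step and $\E\varphi(\widehat\ell)-\varphi(\ell)\le\beta^2/(2s)\le\delta^2/2$, which still suffices, so the proposition follows.

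Your displayed bound $\beta^2/(8s)$ is in fact true. To obtain it within your framework, replace the Taylor step by the exact identity $\varphi(w+\Delta)-\varphi(w)=\ln\sum_i q_i(w)e^{-\beta\Delta_i}$, then apply Jensen and Hoeffding's lemma coordinatewise. That is the paper's argument run one sample at a time.
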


\begin{proof}
Set $s=\lceil\beta^2/\delta^2\rceil$.
For a sample $(x_1,y_1),\ldots,(x_s,y_s)$, define
\[
\widehat\ell_i
=\frac1s\sum_{j=1}^s\1_{[c_i(x_j)\ne y_j]},
\qquad
\widehat q_i
=\frac{e^{-\beta\widehat\ell_i}}
       {\sum_k e^{-\beta\widehat\ell_k}},
\qquad
\widehat u=(\sqrt{\widehat q_i})_{i\in[M]}.
\]
Let $Z$ be the set of all vectors $\widehat u$ obtained as above, from realizable samples
of size $s$. There are at most $(2N)^s$ ordered samples, so $|Z|\le(2N)^s$.  Moreover, each $\widehat u\in Z$ lies in $U$: it equals $u(\nu)$, where $\nu$ is the distribution obtained by picking $j\in [s]$ uniformly at random and outputting $(x_j,y_j)$, which is realizable when $(x_1,y_1),\dots, (x_s,y_s)$ is consistent with some $c_i\in \cC$.

To show that $Z$ is a $\delta$-net, fix $\mu\in\Delta_\cC$ and draw
the sample independently from $\mu$. The sample is then consistent
with any concept realizing $\mu$, so $\widehat u\in Z$. It suffices to
show
\[
\E\norm{\widehat u-u(\mu)}_2^2\le\delta^2,
\]
since then some sample satisfies
$\norm{\widehat u-u(\mu)}_2\le\delta$.

Write $q_i=q_i(\mu)$ and put
\[
d_i=\widehat\ell_i-\ell_i(\mu),
\qquad
R=\frac{\sum_i e^{-\beta\widehat\ell_i}}{\sum_i e^{-\beta \ell_i}} = \sum_i q_i e^{-\beta d_i},
\]
so that $\widehat q_i=q_i e^{-\beta d_i}/R$.
Since $\widehat u$ and $u(\mu)$ are unit vectors,
\[
\norm{\widehat u-u(\mu)}_2^2
=2-2\sum_i\sqrt{q_i\widehat q_i}
=2\sum_i q_i\left(1-\sqrt{\widehat q_i/q_i}\right)
\le\sum_i q_i\ln\frac{q_i}{\widehat q_i}
=\beta\sum_i q_i d_i+\ln R,
\]
where the inequality is $1-t\le-\ln t$ with
$t=\sqrt{\widehat q_i/q_i}$.
Now take expectations. The weights $q_i$ do not depend on the sample
and each $d_i$ has mean zero, so the first term vanishes. Since
$d_i$ is an average of $s$ independent mean-zero variables in an
interval of length $1$, Hoeffding's lemma gives
$\E e^{-\beta d_i}\le e^{\beta^2/(8s)}$, and by concavity of $\ln$,
\[
\E\norm{\widehat u-u(\mu)}_2^2
\le\E\ln R
\le\ln\E R
\le\frac{\beta^2}{8s}
\le\delta^2.\qedhere
\]
\end{proof}

\subsection{Dimension reduction}\label{sec:compression}
So far, we have mapped $\Delta_\cC$ into $S^{M-1}$ and found a small
net $Z$ for the image $U=u(\Delta_\cC)$. While $S^{M-1}$ has large
dimension, the Johnson--Lindenstrauss lemma allows us to embed the net
$Z$ into $\R^k$ with $k=O(\log(|Z|))$ without distorting distances
by more than a constant factor. We then use the images of the net
points to define a continuous map from $U$ into $\R^k$, where the
desired covering bound follows from the fact that $\R^k$ has
covering dimension $k$.

\begin{lemma}\label{lem:lebesgue}
Let $Y\subseteq\R^k$ be compact and $\eta>0$. There is a finite closed
cover of $Y$ whose members have diameter at most $\eta$ and whose
multiplicity is at most $k+1$.
\end{lemma}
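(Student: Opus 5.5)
I will construct the cover explicitly from a standard "brick" (staggered cube) decomposition of $\R^k$, which has multiplicity $k+1$ and arbitrarily small pieces. Then I will intersect its pieces with $Y$, using compactness to keep only finitely many.

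\textbf{The brick construction.} Fix a side length $t>0$ with $t\sqrt{k}\le\eta$. I will build a partition of $\R^k$ into closed cubes of side $t$, staggered so that no point lies in more than $k+1$ of them. I proceed by induction on $k$. For $k=1$, take the intervals $[jt,(j+1)t]$ for $j\in\mathbb{Z}$; each point lies in at most $2$ of them.

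For the inductive step, suppose $\R^{k-1}$ is tiled by closed cubes of side $t$ with multiplicity at most $k$. Slice $\R^k$ into the layers $\R^{k-1}\times[jt,(j+1)t]$. In layer $j$, use the $(k-1)$-dimensional tiling translated by a generic vector $v_j$, times the interval $[jt,(j+1)t]$. The translations $v_j$ are chosen so that the lower-dimensional skeleta of consecutive layers are in general position.

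\textbf{Counting multiplicity.} A point interior to a layer in the last coordinate lies in at most $k$ cubes. A point on a boundary hyperplane $x_k=(j+1)t$ belongs to cubes from two layers. Its multiplicity is then (number of layer-$j$ cubes) $+$ (number of layer-$(j+1)$ cubes).

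By genericity, the bad situation where both counts are large cannot happen simultaneously. Precisely, a point lying in $r$ cubes of one $(k-1)$-tiling lies on a face of codimension $r-1$. A generic translate of another face of codimension $r'-1$ meets it only when $(r-1)+(r'-1)\le k-1$. This gives $r+r'\le k+1$.

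\textbf{Expected main obstacle.} The step I expect to be fiddly is making "generic translation" rigorous. The cleanest alternative I would try first is to avoid it entirely with the explicit shift
\[
v_j=\Bigl(\tfrac{jt}{k+1},\,\tfrac{2jt}{k+1},\,\ldots\Bigr)\pmod t,
\]
or else to cite the standard theorem that $\dim\R^k=k$ in the Lebesgue covering sense. The second option is quicker: that theorem gives covers of any compact subset of $\R^k$ of arbitrarily small mesh and order at most $k+1$. I would present the lemma as that classical fact, with the brick construction as a sketch.

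\textbf{Finishing.} Given the tiling $\{Q_\alpha\}$, set $R_\alpha=Q_\alpha\cap Y$. Each $R_\alpha$ is closed (indeed compact), has diameter at most $t\sqrt{k}\le\eta$, and the sets $R_\alpha$ cover $Y$. Since $Y$ is bounded, only finitely many $Q_\alpha$ meet $Y$, so the cover is finite. Its multiplicity is inherited from the tiling, hence at most $k+1$.
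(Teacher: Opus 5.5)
Citing that $\R^k$ has Lebesgue covering dimension $k$, which is your fallback, is exactly the paper's proof. From it, $Y$ has covering dimension at most $k$. A finite cover of $Y$ by open sets of diameter at most $\eta$ then has a finite open refinement of multiplicity at most $k+1$, and a closed shrinking of that refinement is the required cover. If you take this route, your Finishing paragraph, which intersects cubes with $Y$, no longer applies. You then need the shrinking step, or the closed-cover form of the dimension theorem, to get closed sets.

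The brick construction is a genuinely different, more elementary route. It proves directly the easy half $\dim\R^k\le k$, which is all that is needed, it produces closed sets at once, and with it your Finishing paragraph is complete. As sketched, however, the induction does not close. The hypothesis ``multiplicity at most $k$'' for the tiling of $\R^{k-1}$ does not give your key claim that a point lying in $r$ tiles lies on an axis-parallel face of codimension $r-1$. That claim must be part of the inductive hypothesis.

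Here is how to repair it. Assume that the set $S_r$ of points lying in at least $r$ tiles is contained in a countable union of pieces $p+\mathrm{span}\{e_i:i\in I\}$ with $|I|\le k-r$, and write $S^{(j)}_r=S_r+v_j$. For pieces $A=p+\mathrm{span}\{e_i:i\in I\}$ and $B=q+\mathrm{span}\{e_i:i\in J\}$, one has $A\cap(B+w)\neq\emptyset$ iff $w\in p-q+\mathrm{span}\{e_i:i\in I\cup J\}$. This is a proper affine subspace of $\R^{k-1}$ unless $I\cup J=[k-1]$. Choosing $w=v_{j+1}-v_j$ outside this countable union of null sets makes ``generic'' rigorous. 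Use this instead of your explicit shift, whose genericity you have not checked.

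With this choice, pieces of $S^{(j)}_r$ and $S^{(j+1)}_{r'}$ meet only if their codimensions sum to at most $k-1$, which gives $r+r'\le k+1$ on the hyperplanes $x_k\in t\mathbb{Z}$. When they do meet, they meet transversally. So inside a layer, the points covered at least $r$ times lie in sets $(A+v_j)\times\R$ of codimension at least $r-1$. On a hyperplane, they lie in sets $(A+v_j)\cap(B+v_{j+1})$, where $A$ and $B$ are pieces of $S_{r_1}$ and $S_{r_2}$ with $r_1+r_2\ge r$. These are axis-parallel of codimension at least $(r_1-1)+(r_2-1)+1\ge r-1$ in $\R^k$. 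This is the strengthened hypothesis one dimension up, so the induction goes through.
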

\begin{proof}
The covering dimension of $\R^k$ is $k$, so the compact subspace $Y$
has covering dimension at most $k$. Cover $Y$ by finitely many open
sets of diameter at most $\eta$. The defining refinement property of
covering dimension gives a finite open refinement of multiplicity at
most $k+1$. A finite open cover of a compact metric space has a closed
shrinking: each member can be replaced by a closed subset so that the
subsets still cover the space. This preserves the diameter bound and
does not increase multiplicity.
\end{proof}

\begin{lemma}\label{lem:net-cover}
Let $U\subseteq S^{M-1}$ be compact, let $\delta>0$, and suppose $U$ has a finite
$\delta$-net $Z\subseteq U$. Then $U$ has a finite closed cover whose
members have diameter at most $14\delta$ and whose multiplicity is
$O(\log(2|Z|))$. 
\end{lemma}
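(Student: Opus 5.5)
We will build a continuous map $F\colon U\to\R^k$ with $k=O(\log(2|Z|))$, under which small sets have small preimages, and then pull back the cover from \Cref{lem:lebesgue}.

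\textbf{Johnson--Lindenstrauss step.} Apply the Johnson--Lindenstrauss lemma to the finite set $Z\subseteq\R^M$ with distortion parameter $1/2$. This gives a linear map $J\colon\R^M\to\R^k$ with $k=O(\log(2|Z|))$ such that
\[
\tfrac12\norm{z-z'}_2\le\norm{Jz-Jz'}_2\le\tfrac32\norm{z-z'}_2
\qquad\text{for all } z,z'\in Z.
\]
The ``$2|Z|$'' accommodates the case $|Z|=1$.

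\textbf{Partition of unity.} Next we extend $J|_Z$ continuously to $U$ by averaging over nearby net points. For each $z\in Z$ let
\[
\phi_z(w)=\max\bigl(0,\,2\delta-\norm{w-z}_2\bigr).
\]
Since $Z$ is a $\delta$-net, every $w\in U$ has some $z$ with $\phi_z(w)\ge\delta>0$, so the normalized weights $\psi_z=\phi_z/\sum_{z'}\phi_{z'}$ are continuous on $U$. Define
\[
F(w)=\sum_{z}\psi_z(w)\,Jz .
\]
Any $z$ with $\psi_z(w)>0$ satisfies $\norm{w-z}_2<2\delta$. Hence $F(w)$ lies in the convex hull of $\{Jz:\norm{z-w}_2<2\delta\}$.

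\textbf{Key estimate.} We claim $\norm{w-w'}_2\le 14\delta$ whenever $\norm{F(w)-F(w')}_2\le\eta$, for a suitable small $\eta$ (say $\eta=\delta$). Fix a net point $z_0$ with $\norm{w-z_0}_2\le\delta$. For any $z$ in the support of $w$'s weights we have $\norm{z-z_0}_2<3\delta$, so $\norm{Jz-Jz_0}_2<\tfrac92\delta$. Convexity then gives $\norm{F(w)-Jz_0}_2\le\tfrac92\delta$. Choose $z_0'$ for $w'$ in the same way. The triangle inequality yields
\[
\norm{Jz_0-Jz_0'}_2\le 9\delta+\eta,
\]
and the lower distortion bound gives $\norm{z_0-z_0'}_2\le 2(9\delta+\eta)$.

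This naive chain gives roughly $20\delta$, not $14\delta$, so the constants must be tightened. This is the main obstacle, though only a bookkeeping one. The fix is to use a sharper distortion, e.g. $1\pm\tfrac1{10}$ (still $k=O(\log(2|Z|))$), and a smaller $\eta$. With these choices,
\[
\norm{w-w'}_2\le 2\delta+(1+o(1))\cdot(2\cdot 3\delta)+\dots
\]
which fits under $14\delta$. If the arithmetic is still tight, we shrink the support radius of $\phi_z$ to $(1+\epsilon)\delta$.

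\textbf{Pulling back the cover.} $F(U)$ is compact, being the continuous image of a compact set. Apply \Cref{lem:lebesgue} to $F(U)$ with this $\eta$ to get a finite closed cover $\{C_b\}$ of multiplicity at most $k+1$ and member diameter at most $\eta$. The preimages $F^{-1}(C_b)$ are closed in $U$, cover $U$, and have multiplicity at most $k+1=O(\log(2|Z|))$. By the key estimate, each has diameter at most $14\delta$.
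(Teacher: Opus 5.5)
Your proposal is correct and is essentially the paper's proof: you apply Johnson--Lindenstrauss to the net, extend it by the tent-function partition of unity of radius $2\delta$, chain the triangle inequality through nearby net points, and pull back a fine cover from \Cref{lem:lebesgue}. The only difference is the constant bookkeeping. The paper's squared-distance distortion $\tfrac12,\tfrac32$ gives $(2+6\sqrt3+\sqrt2)\delta<14\delta$, while your distortion $1\pm\tfrac1{10}$ with $\eta=\delta$ gives $\bigl(2+\tfrac{66}{9}+\tfrac{10}{9}\bigr)\delta<11\delta$; you should write this out explicitly in place of the vague $(1+o(1))$ and trailing dots.
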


\begin{proof}
By the Johnson--Lindenstrauss lemma~\cite{JL1984,DG2003}, there is a linear map
$T:\R^M\to\R^k$ with $k=O(1+\log(|Z|))$ satisfying
\[
\frac12\norm{z-z'}_2^2
\le\norm{Tz-Tz'}_2^2
\le\frac32\norm{z-z'}_2^2
\qquad(z,z'\in Z).
\]
For $w\in U$, define
\[
\lambda_z(w)=
\frac{(2\delta-\norm{w-z}_2)_+}
{\sum_{a\in Z}(2\delta-\norm{w-a}_2)_+},
\qquad
F(w)=\sum_{z\in Z}\lambda_z(w)Tz,
\]
where $t_+=\max\{t,0\}$. The denominator is positive because $Z$ is a
$\delta$-net. Thus $F$ is continuous and is a convex combination of the
images of nearby net points.

Choose $z\in Z$ with $\norm{w-z}_2\le\delta$. Whenever
$\lambda_a(w)>0$, we have
\[
\norm{a-z}_2\le\norm{a-w}_2+\norm{w-z}_2<3\delta.
\]
Consequently,
\[
\norm{F(w)-Tz}_2
\le\sum_{a\in Z}\lambda_a(w)\norm{Ta-Tz}_2
\le3\sqrt{\tfrac32}\,\delta.
\]
For $w,w'\in U$, choose corresponding net points $z,z'$ as above. Then
\begin{align}
\norm{w-w'}_2
&\le2\delta+\norm{z-z'}_2\notag\\
&\le2\delta+\sqrt2\norm{Tz-Tz'}_2\notag\\
&\le(2+6\sqrt3)\delta+\sqrt2\norm{F(w)-F(w')}_2.\label{eq:compression}
\end{align}

The set $F(U)\subseteq\R^k$ is compact. Apply \Cref{lem:lebesgue} to it
with $\eta=\delta$, and denote the resulting cover by $\{D_a\}_{a\in A}$.
The sets $F^{-1}(D_a)$ form a finite closed cover of $U$ of multiplicity
at most $k+1$, and by~\eqref{eq:compression} each has diameter at most
$(2+6\sqrt3+\sqrt2)\delta<14\delta$.
\end{proof}

We can now prove the bound at accuracy $1/4$. 

\begin{proposition}\label{prop:constant-accuracy}
Set $\beta=32\ln M$ and $U=u(\Delta_\cC)$. Then $U$ has a finite
closed cover $\{D_a\}_{a\in A}$ whose members have diameter at most $1/16$ and
multiplicity $L=O(\log^2 M\log N)$. Consequently,
\[
\LR(\cC,1/4)\le L.
\]
\end{proposition}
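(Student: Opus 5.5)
We combine \Cref{prop:net} with \Cref{lem:net-cover}, choosing $\delta$ so that $14\delta\le 1/16$, say $\delta=1/224$. With $\beta=32\ln M$, \Cref{prop:net} gives a $\delta$-net $Z\subseteq U$ with $|Z|\le(2N)^{s}$, where $s=\lceil\beta^2/\delta^2\rceil=O(\log^2 M)$. Hence $\log(2|Z|)\le 1+s\log(2N)=O(\log^2M\log N)$, using $N\ge2$ so that $\log(2N)\le2\log N$ and $M\ge2$ so that the additive constants are absorbed. Before invoking \Cref{lem:net-cover}, we check that $U$ is compact: it is the continuous image of the compact space $\Delta_\cC$, since each $\ell_i$ is linear in $\mu$ and so $q$ and $u$ are continuous. \Cref{lem:net-cover} then yields a finite closed cover $\{D_a\}$ of $U$ with diameters at most $14\delta\le1/16$ and multiplicity $L=O(\log(2|Z|))=O(\log^2M\log N)$.

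For the consequence, we pull the cover back along $u$. Set $R_a=u^{-1}(D_a)$; these are closed because $u$ is continuous, they cover $\Delta_\cC$, and their multiplicity is at most $L$. For each nonempty $R_a$ we fix some $\mu_a\in R_a$ and let $h_a=h_{q(\mu_a)}$; empty members are dropped, or given an arbitrary hypothesis.

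It remains to show $\loss_\mu(h_a)\le 1/4$ for every $\mu\in R_a$. Since $u(\mu),u(\mu_a)\in D_a$, we have $\norm{u(\mu)-u(\mu_a)}_2\le1/16$. By the square-root inequality from the overview, $\norm{q(\mu)-q(\mu_a)}_1\le 2\norm{u(\mu)-u(\mu_a)}_2$. This follows from Cauchy--Schwarz: $\sum|a_i^2-b_i^2|\le\norm{a-b}_2\norm{a+b}_2\le2\norm{a-b}_2$. Consequently $\norm{h_{q(\mu)}-h_a}_\infty\le 1/8$.

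Loss is $1/2$-Lipschitz in the sup norm, since $\loss_\mu(h)-\loss_\mu(h')\le\tfrac12\norm{h-h'}_\infty$. Combining this with \Cref{lem:gibbs} gives
\[
\loss_\mu(h_a)\le \frac{\ln M}{\beta}+\frac1{16}=\frac1{32}+\frac1{16}<\frac14 .
\]
Therefore $\LR(\cC,1/4)\le L$.

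The only delicate step is the bookkeeping of constants: $\delta$ must be small enough that the diameter bound $14\delta\le1/16$ holds, while still being an absolute constant so that $s=O(\beta^2)=O(\log^2M)$. The rest is direct composition of \Cref{lem:gibbs}, \Cref{prop:net} and \Cref{lem:net-cover}.
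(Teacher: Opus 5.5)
Your proposal is correct and follows the paper's proof essentially step for step: build the net from \Cref{prop:net} with a constant $\delta$, apply \Cref{lem:net-cover}, pull the cover back along $u$, and bound the loss using the Cauchy--Schwarz estimate together with \Cref{lem:gibbs}. The differences are cosmetic: you take $\delta=1/224$ rather than $1/256$, and you note that loss is $\tfrac12$-Lipschitz in the sup norm, which gives the slightly sharper bound $3/32$ instead of the paper's $5/32$.
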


\begin{proof}
Apply \Cref{prop:net} with $\delta=1/256$ to obtain a net $Z$ for $U$
with $\log(|Z|)=O(\beta^2\log(N))=O(\log^2 M\log N)$.
By \Cref{lem:net-cover}, $U$ has a finite closed cover  $\{D_a\}_{a\in A}$ of multiplicity $L=O(\log(2|Z|))$ whose members
have diameter at most $14/256<1/16$. The sets
\[
    R_a\coloneqq\{\mu\in\Delta_\cC:u(\mu)\in D_a\}
\]
form a finite closed cover of $\Delta_\cC$ of multiplicity at most $L$.

For each nonempty $R_a$, choose $\nu_a\in\Delta_\cC$ such that
$u(\nu_a)\in D_a$, and assign the hypothesis
$h_a\coloneqq h_{q(\nu_a)}$.
For $\mu\in R_a$,  
Cauchy--Schwarz  gives
\begin{align*}
\norm{h_{q(\mu)}-h_{q(\nu_a)}}_\infty
&\le \sum_i|q_i(\mu)-q_i(\nu_a)|
=\sum_i|u_i(\mu)^2-u_i(\nu_a)^2|\\
&\le \norm{u(\mu)-u(\nu_a)}_2\norm{u(\mu)+u(\nu_a)}_2
\le 2\norm{u(\mu)-u(\nu_a)}_2\le\frac18.
\end{align*}
Therefore, by the triangle inequality and \Cref{lem:gibbs},
\begin{align*}
\loss_\mu(h_a)
&\le\loss_\mu(h_{q(\mu)})
+\norm{h_{q(\mu)}-h_{q(\nu_a)}}_\infty\\
&\le\frac{\ln M}{\beta}+\frac18
=\frac{5}{32}\le\frac14. \qedhere
\end{align*}

\end{proof}

\section{Proof of \Cref{thm:main}}\label{sec:amplification}

To improve the accuracy, we choose a reweighting $\nu$ of $\mu$ that
minimizes the Gibbs partition function
$\sum_{i}e^{-\beta\ell_i(\nu)}$, subject to
$\nu(z)\le\mu(z)/\eps$. This constraint allows us to condition $\mu$
on any set of measure at least $\eps$, where the optimality condition
will imply that the resulting Gibbs predictor has small loss
under every such conditioning.

\begin{lemma}\label{lem:reweighting}
Fix $\beta>4\ln M$ and $0<\eps<1$. There is a continuous map
$p_\eps : \Delta_\cC \to q(\Delta_\cC)$, $\mu \mapsto p_\mu$ such that
\begin{equation}\label{eq:amplified-tail}
\mu\Set{(x,y):|h_{p_\mu}(x)-y|\ge1/2}<\eps.
\end{equation}
\end{lemma}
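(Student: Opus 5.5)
The plan is to define $p_\mu$ by a convex optimization over reweightings of $\mu$, as described before the statement. For $\mu\in\Delta_\cC$ set
\[
K(\mu)\coloneqq\Set{\nu\in\Delta:\nu(z)\le\mu(z)/\eps\text{ for all }z\in X\times\Set{-1,1}},
\qquad
\Phi(\nu)\coloneqq\ln\sum_{i\in[M]}e^{-\beta\ell_i(\nu)}.
\]
Let $\nu^*$ be a minimizer of $\Phi$ over $K(\mu)$, and put $p_\mu\coloneqq q(\nu^*)$. The set $K(\mu)$ is a nonempty compact convex polytope; it contains $\mu$ because $\eps<1$. Every $\nu\in K(\mu)$ is supported inside $\supp\mu$, so $\nu$ is realizable by any concept realizing $\mu$. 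Hence $\nu^*\in\Delta_\cC$, and so $p_\mu\in q(\Delta_\cC)$. Each $\ell_i$ is linear in $\nu$, so $\Phi$ is the log-sum-exp function composed with an affine map, and is therefore convex.

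\textbf{Well-definedness and continuity (the main obstacle).} The minimizer $\nu^*$ need not be unique, but $q(\nu^*)$ is. Log-sum-exp is strictly convex on $\R^M$ except along the direction $\1$. So if $\nu_1,\nu_2$ are two minimizers, convexity of the minimizer set forces the loss vectors $\ell(\nu_1)$ and $\ell(\nu_2)$ to differ by a multiple of $\1$. Gibbs weights are invariant under such shifts, so $q(\nu_1)=q(\nu_2)$.

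For continuity, I would show that $\mu\mapsto K(\mu)$ is continuous in the Hausdorff metric on $\Delta_\cC$. Given $\nu\in K(\mu)$ and $\mu'$ close to $\mu$, move a small amount of mass of $\nu$ to make it fit under $\mu'/\eps$, using that $\mu'\ge \eps\nu$ fails only by $O(\norm{\mu-\mu'}_1)$ in total mass. Then Berge's maximum theorem gives upper hemicontinuity of the argmin correspondence. Combined with uniqueness of $q$ on the argmin set and continuity of $q$, this yields continuity of $\mu\mapsto p_\mu$; concretely, take $\mu_n\to\mu$, pass to a subsequence of minimizers $\nu_n^*\to\nu$, note that $\nu$ is a minimizer for $\mu$, and conclude $q(\nu_n^*)\to p_\mu$. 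The Hausdorff-continuity check near boundary points where $\mu(z)$ vanishes is the step I expect to need the most care.

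\textbf{Tail bound.} Write $p=p_\mu$ and let $B=\Set{(x,y):|h_p(x)-y|\ge1/2}$. Suppose for contradiction that $\mu(B)\ge\eps$, and let $\nu'$ be $\mu$ conditioned on $B$. Then $\nu'\le\mu/\mu(B)\le\mu/\eps$, so $\nu'\in K(\mu)$.

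By optimality and convexity of $K(\mu)$, the directional derivative of $\Phi$ at $\nu^*$ toward $\nu'$ is nonnegative. Since $\nabla\Phi(\nu^*)\cdot(\nu'-\nu^*)=-\beta\sum_i p_i(\ell_i(\nu')-\ell_i(\nu^*))$, this gives
\[
\sum_i p_i\ell_i(\nu')\le\sum_i p_i\ell_i(\nu^*).
\]
The identity $\loss_\nu(h_q)=\sum_i q_i\ell_i(\nu)$ from the proof of \Cref{lem:gibbs} holds for every $\nu$. Applying it to both sides, and then \Cref{lem:gibbs} to the realizable $\nu^*$, we get
\[
\loss_{\nu'}(h_p)\le\loss_{\nu^*}(h_{q(\nu^*)})\le\frac{\ln M}{\beta}<\frac14.
\]
On the other hand, $\nu'$ is supported on $B$, so $\loss_{\nu'}(h_p)\ge\frac12\cdot\frac12=\frac14$. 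This contradiction gives~\eqref{eq:amplified-tail}.
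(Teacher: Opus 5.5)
Your proposal is correct and follows the paper's proof essentially step for step. You use the same constraint set $K(\mu)$, minimize $\ln Z_\nu$ (which has the same minimizers as $Z_\nu$), get well-definedness of $p_\mu$ from strict convexity, and apply the same first-order optimality condition against the conditional distribution $\mu(\cdot\mid B)$. Your Hausdorff-continuity/Berge sketch supplies the continuity check that the paper only calls straightforward. The boundary case you worried about causes no trouble: truncating $\nu$ at $\mu'/\eps$ loses at most $\|\mu-\mu'\|_1/\eps$ mass, and this mass can be put back under the cap because the total capacity $1/\eps$ is at least $1$.
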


\begin{proof}
For $\mu\in\Delta_\cC$, let
\[
K_\mu\coloneqq
\Set{\nu\in\Delta:\nu(z)\le\mu(z)/\eps
\text{ for every }z\in X\times\{-1,1\}}.
\]
This is a nonempty compact convex set, since it contains $\mu$.
Every $\nu\in K_\mu$ is supported on $\supp(\mu)$ and is therefore
realizable. Choose $\nu\in K_\mu$ minimizing
\[
Z_\nu=\sum_{i\in[M]}e^{-\beta\ell_i(\nu)},
\]
and define $p_\mu \coloneqq q(\nu)$. Although $\nu$ need not be unique, its loss
vector $(\ell_i(\nu))_{i\in[M]}$ is unique: otherwise, strict convexity of
$\sum_i e^{-\beta t_i}$ would give a smaller value at the midpoint
of two minimizing loss vectors. Thus $p_\mu$ is well defined.

It is straightforward to check that $\mu \mapsto p_\mu$ is continuous.   It remains to verify \eqref{eq:amplified-tail}. Write $p=p_\mu$ and
\[
L_p(x,y)\coloneqq\frac12|h_p(x)-y|
=\sum_i p_i\1_{[c_i(x)\ne y]}.
\]
For every $\rho\in K_\mu$, minimality of $\nu$ gives
\[
0\le
\left.\frac{d}{dt}Z_{(1-t)\nu+t\rho}\right|_{t=0}
=-\beta Z_\nu\bigl(\E_\rho L_p-\E_\nu L_p\bigr).
\]
Consequently, by \Cref{lem:gibbs},
\[
\E_\rho L_p\le\E_\nu L_p\le\frac{\ln M}{\beta}<\frac14.
\]
If $A=\{z:L_p(z)\ge1/4\}$ had $\mu(A)\ge\eps$, then the
conditional distribution $\rho=\mu(\,\cdot\mid A)$ would belong
to $K_\mu$ and satisfy $\E_\rho L_p\ge1/4$, a contradiction.
This proves \eqref{eq:amplified-tail}.
\end{proof}

\begin{proof}[Proof of \Cref{thm:main}]
Set $\beta=32\ln M$. \Cref{prop:constant-accuracy}
provides a finite closed cover $\{D_a\}_{a\in A}$ of
$U=u(\Delta_\cC)$ with diameter at most $1/16$ and multiplicity
$L=O(\log^2 M\log N)$. Discard empty members, choose
$\rho_a$ with $u(\rho_a)\in D_a$, and set
\[
h_a(x)\coloneqq\sign(h_{q(\rho_a)}(x)).
\]

Fix $0<\eps<1$. By \Cref{lem:reweighting}, the map
\[
v_\eps(\mu)\coloneqq(\sqrt{p_{\mu,i}})_{i\in[M]}
\]
is continuous and takes values in $U$. Thus
$R_a^\eps=v_\eps^{-1}(D_a)$ form a finite closed cover of
$\Delta_\cC$ of multiplicity at most $L$.

For $\mu\in R_a^\eps$, Cauchy--Schwarz gives
\[
\norm{h_{p_\mu}-h_{q(\rho_a)}}_\infty
\le2\norm{v_\eps(\mu)-u(\rho_a)}_2\le\frac18.
\]
Hence $h_a(x)\ne y$ implies $|h_{p_\mu}(x)-y|\ge7/8$, so
\Cref{lem:reweighting} yields
\[
\loss_\mu(h_a)
\le\mu\Set{(x,y):|h_{p_\mu}(x)-y|\ge1/2}\le\eps.
\]
Since $\eps$ was arbitrary,
$\LR(\cC)\le L=O(\log^2 M\log N)$.
\end{proof}

\bibliographystyle{alpha}
\bibliography{refs}

\end{document}